\newcommand\latinabbrev[1]{
  \peek_meaning:NTF . {
    #1\@}%
  { \peek_catcode:NTF a {
      #1.\@ }%
    {#1.\@}}}
\def\eg{\latinabbrev{e.g}}
\def\etal{\latinabbrev{et al}}
\def\etc{\latinabbrev{etc}}
\def\ie{\latinabbrev{i.e}}
\DeclareMathOperator*{\argmin}{arg\,min}
\title{Neural Networks with Smooth Adaptive Activation Functions for Regression}
\author{
Le Hou$^1$, Dimitris Samaras$^1$, Tahsin M. Kurc$^{2,3}$,
       Yi Gao$^{2,1,4}$, Joel H. Saltz$^{2,1,5,6}$
       \\
       $^1$Department of Computer Science, Stony Brook University, NY, USA\\
       \{lehhou,samaras\}@cs.stonybrook.edu\\
       $^2$Department of Biomedical Informatics, Stony Brook University, NY, USA\\
       \{tahsin.kurc,joel.saltz\}@stonybrook.edu\\
       $^3$Oak Ridge National Laboratory, USA\\
       $^4$Department of Applied Mathematics and Statistics, NY, USA\\
       $^5$Department of Pathology, Stony Brook Hospital, NY, USA\\
       yi.gao@stonybrookmedicine.edu\\
       $^6$Cancer Center, Stony Brook Hospital, NY, USA\\
}
\begin{document}

\maketitle

\begin{abstract}
In Neural Networks (NN), Adaptive Activation Functions (AAF) have parameters that control the shapes of activation functions. These parameters are trained along with other parameters in the NN. AAFs have improved performance of Neural Networks (NN) in multiple classification tasks. In this paper, we propose and apply AAFs on feedforward NNs for regression tasks. We argue that applying AAFs in the regression (second-to-last) layer of a NN can significantly decrease the bias of the regression NN. However, using existing AAFs may lead to overfitting. To address this problem, we propose a Smooth Adaptive Activation Function (SAAF) with piecewise polynomial form which can approximate any continuous function to arbitrary degree of error. NNs with SAAFs can avoid overfitting by simply regularizing the parameters. In particular, an NN with SAAFs is Lipschitz continuous given a bounded magnitude of the NN parameters. We prove an upper-bound for model complexity in terms of fat-shattering dimension for any Lipschitz continuous regression model. Thus, regularizing the parameters in NNs with SAAFs avoids overfitting. We empirically evaluated NNs with SAAFs and achieved state-of-the-art results on multiple regression datasets.
\end{abstract}

\section{Introduction}
Neural Networks (NNs), especially Convolutional Neural Networks (CNNs), improved the state-of-the-art on multiple classification tasks~\cite{he2015delving} and regression tasks~\cite{belagiannis2015robust,szegedy2013deep}. We advocate the use of Adaptive Activation Functions (AAF) in NNs applied to regression problems for two reasons. First, recent studies showed that AAFs improve the classification performance of NNs~\cite{agostinelli2014learning,he2015delving,jin2015deep}. Second, the output of a regression NN should be accurate for a range of ground truth values, as supposed to only two binary labels. Thus, a NN with a fixed architecture tends to have larger biases in regression tasks, compared to classification tasks. To address this problem, we argue that applying AAFs on the regression (second-to-last) layer can reduce the model bias in regression problems more efficiently than adding more neurons.

In contrast to conventional non-adaptive activation functions, AAFs have parameters that are trained along with other parameters in the NN. It is rather challenging to construct and apply AAFs. If an AAF is too simple, it may not be able to approximate the optimal activation function to a desired degree of approximation error, especially for regression problems. On the other hand, complex AAFs might lead to severe overfitting. Designing the AAF with the right approximation power and complexity for each application is contingent on experience and trial-and-error.

\begin{figure}[h!]
\begin{center}
   \includegraphics[width=0.995\linewidth,trim={0 0 0 0},clip]{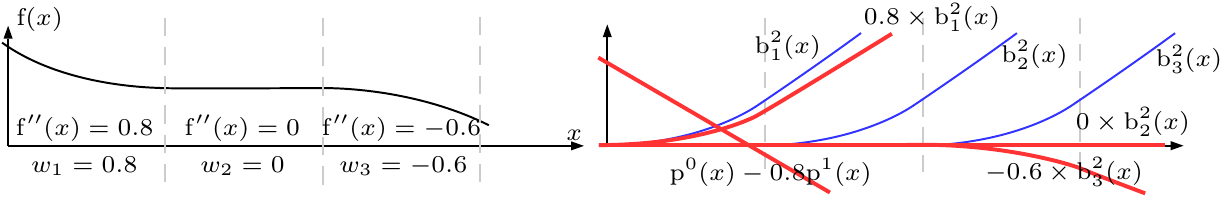}
\end{center}
   \caption{An illustration of the construction of the proposed SAAF with piecewise polynomial form (best viewed in color). Left: a piecewise quadratic SAAF $\mathrm{f}(x)$. In each quadratic segment, the second order derivative $\mathrm{f}''(x)$ is defined by weight $w_i$. Regularizing the parameters $w_{1,2,3}$ leads to a small second order derivative. A regression NN with SAAFs is Lipschitz continuous which results in bounded model complexity. Right: $\mathrm{f}(x)$ equals to the summation of the red curves. Eq.~\ref{equ:activation} gives the formal definitions of the red curves and basis functions $p^{1,2}(x)$, $b_{1,2,3}^2(x)$.}
\label{fig:firstfig}
\end{figure}

AAFs with a simpler form utilize predefined functions. Conservative methods adjust the parameters like the slope of a sigmoid-like activation function~\cite{Bai2009}. These AAFs are limited in form but often yield a faster convergence rate. Other predefined functions or combinations of functions with adjustable parameters~\cite{conf/ijcnn/XuZ00,journals/eaai/IsmailJZZ13} improve the approximation ability of AAFs. Using these AAFs led to better classification accuracy or architectures with fewer model parameters. In these approaches, the AAFs are linear combinations of sub-functions or nested sub-functions including the sigmoid, exponential, sine, \etc. Their major drawback is that the set of sub-functions need to be selected carefully for different datasets, so that the optimal shape of the activation function is approximated to a desired degree of error, without introducing too many parameters.

Piecewise polynomials are well-developed tools for constructing general and complex form AAFs. Specific parameterizations, \eg  \;Splines, can handle control points implicitly~\cite{Guarnieri_spline,conf/ijcnn/HongGC11}. However, the training processes of these methods are very complex. Additionally, too many parameters are introduced for each polynomial segment, increasing the probability of severe overfitting. Non-Spline piecewise linear or quadratic parameterizations~\cite{journals/tnn/Hikawa03a,HopfieldLinear} have issues of discontinuity, non-differentiability or unbounded smoothness, due to their parameterizations.

Note that in most of the methods mentioned so far, one global AAF is applied on all neurons. Recent research in deep CNNs proposed to learn an AAF for each layer of neurons or even individual neurons~\cite{goodfellow2013maxout,he2015delving,agostinelli2014learning,jin2015deep}, as an alternative for reducing model bias. Extending the non-adaptive Rectified Linear Units (ReLU) to Parameterized Rectified Linear Units (PReLU)~\cite{he2015delving} introduces a parameter which controls the slope of the activation function. Activation functions at different layers have the same form, but different slope. A maxout neuron~\cite{goodfellow2013maxout} outputs the maximum of a set of linear functions. It can approximate any convex function and achieved state-of-the-art performances on multiple classification tasks. Adaptive Piecewise Linear Units (APLU)~\cite{agostinelli2014learning} learn the position of break points and the slope of linear segments simultaneously during training. As each neuron learns its own maxout function or APLU, the number of parameters in the NNs significantly increases with no clear principles of how to avoid severe overfitting.

\begin{figure*}[h]
\begin{center}
   \includegraphics[width=0.9\linewidth,trim={20 157 22 10},clip]{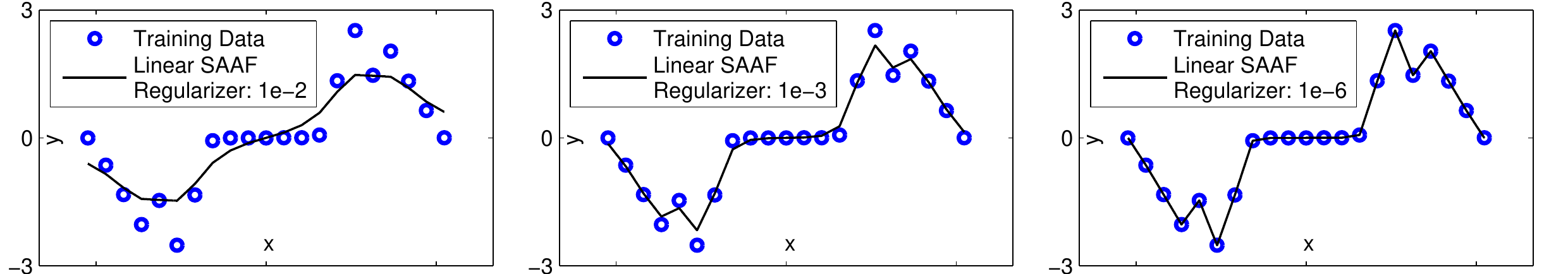}
   \includegraphics[width=0.9\linewidth,trim={20 0 22 157},clip]{figures/smthact-eps-converted-to.pdf}
\end{center}
   \caption{We demonstrate that SAAFs with piecewise polynomial form have regularized smoothness. NNs with SAAFs have bounded model complexity, as explained in Sec.~\ref{sec:properties}. In each plot, there are only 21 training data (input $x$ with ground truth $t$) but 5000 linear or quadratic segments. Thus, most polynomial segments do not have any data to train on. However, the resulting curve is smooth and not overfitting given a reasonable L2 regularization on the parameters in SAAFs. The parameter regularization affects the magnitude of the first and second order derivatives for the piecewise linear and quadratic SAAFs respectively.}
\label{fig:notoverfitting}
\end{figure*}

To conclude, there are two types of existing AAFs: simple AAFs that do not guarantee a bounded approximation ability and complex AAFs that cannot avoid severe overfitting in a principled manner. Viewed in the bias-variance tradeoff paradigm~\cite{bishop2006pattern}, existing AAFs do not guarantee bounded bias and complexity (variance). We propose a novel AAF named Smooth Adaptive Activation Function (SAAF) with piecewise polynomial form. Given a fixed degree of bias or complexity, an SAAF can achieve lower complexity or bias, than existing AAFs. In particular, an SAAF can be regularized under any complexity in terms of the Lipschitz constant of the function and can approximate any function simpler than the given complexity (\ie, with a smaller Lipschitz constant) to an arbitrarily small bias. To regularize SAAF's Lipschitz constant, one can simply apply L2 regularization on its parameters. In contrast, there are no methods that regularize the complexity of existing AAFs in a principled manner. Furthermore, most existing AAFs can not guarantee a bounded function approximation error. We show an upper bound of the fat-shattering dimension for an NN by the Lipschitz constant of its AAFs. Therefore, the Lipschitz constant of AAFs is a good measurement of model complexity. Figs.~\ref{fig:firstfig} and~\ref{fig:notoverfitting} show examples and properties of SAAFs. Our contributions are:
\begin{enumerate}[itemsep=0.0ex,leftmargin=0.5cm]
\item We propose a Smooth Adaptive Activation Function (SAAF) with piecewise polynomial form that can achieve low model bias and complexity \textit{at the same time}:
	\begin{enumerate}
    \item Low model bias: SAAFs can approximate any one-dimensional function to any desired degree of error, given sufficient number of polynomial segments.
	\item Low model complexity: NNs with SAAFs have bounded model complexity in terms of fat-shattering dimension, given a bounded (L2 regularized) magnitude of the parameters, regardless of the number of polynomial segments.
	\end{enumerate}
\item For any Lipschitz continuous regression model, we prove an exact upper-bound for the fat-shattering dimension without other model assumptions. To our best knowledge, such bound does not exist in the literature.
\item We propose to use SAAFs on the regression (second-to-last) layer of regression NNs. Our experimental results are better than current state-of-the-art on multiple pose estimation and age estimation datasets.
\end{enumerate}

\section{Regression neural networks}
\label{sec:desired}
In this section we decompose the learning process of a regression NN to a summation of many one-dimensional function learning processes. Based on this, we argue that applying AAFs on the regression (second-to-last) layer can achieve a small model bias using a small number of parameters. Without loss of generality, assume a regression NN has only one output neuron which outputs a real value $y$ as the prediction, expressed as: $y = \sum\limits_{i=1}^{m}h_i o_i + b$, where $o_{1,2,\dots,m}$ are the outputs from neurons in the previous layer, $h_{1,2,\dots,m}$ are the weights of the output neuron's input synapses, and $b$ is a bias term. We train the NN to minimize the expected regression loss on the training set $E=\argmin\limits_{\theta} \mathrm{E}[(t - y)^2]$, where $\theta \supseteq \{h_{1,2,\dots,m},b\}$ is the set of trainable parameters of the NN. For a multi-layer NN, the output of a neuron $o_i$ is computed by applying the $i$-th activation function $\mathrm{f}_i$ on the input $p_i$, expressed as $o_i = \mathrm{f}_i(p_i)$. Note that $\mathrm{f}_{1,2,\dots,m}$ do not necessarily have the same form. Notably, for almost all of the activation functions, there exists a function $\mathrm{g}_i$ such that $h_i \mathrm{f}_i(p_i) = \mathrm{f}_i\big(\mathrm{g}_i(h_i, p_i) \big)$ holds for all $h_i$, $o_i$. For example, $\mathrm{g}_i(h_i, p_i)=h_i p_i$ for ReLU, Leaky ReLU (LReLU)~\cite{maas2013rectifier} and PReLU. Thus: $y =\sum\limits_{i=1}^{m}\mathrm{f}_i\big(\mathrm{g}_i(h_i, p_i)\big) + b$. In other words, the final prediction of a regression NN is equivalent to the summation of activation function outputs. We denote $\mathrm{g}_i(h_i, p_i)$ as $x_i$, therefore:
\begin{equation}
\begin{split}
y=\sum\limits_{i=1}^{m} \mathrm{f}_i(x_i) + b\text{.}
\end{split}
\label{equ:cuted}
\end{equation}
In the future, we refer to the neurons/layer connected to the output neuron as ``\textbf{regression neurons/layer}''. These regression neurons have activation functions $\mathrm{f}_{1,2,\dots,m}$.

\newtheorem{theorem}{Theorem}
\begin{theorem}
We consider the regression loss $E$ and the set of hypotheses functions defined by Eq.~\ref{equ:cuted}. We assume on a training set, the loss is minimized such that $y = t$. We also assume $x_{1,2,\dots,m}$ are mutually independent on the training set. Then $\mathrm{f}_i(x_i)=\mathrm{E}[t|x_i] + B_i$ for all $i$, where $\mathrm{E}[t|x_i]$ is the expectation of ground truth $t$ given $x_i$ and $B_i$ is a constant.
\label{the:0}
\end{theorem}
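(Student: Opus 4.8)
The plan is to exploit the exact-fit assumption directly. Since the training loss $\mathrm{E}[(t-y)^2]$ attains its minimum with $y=t$, the residual vanishes, so Eq.~\ref{equ:cuted} becomes the identity $t = \sum_{i=1}^{m}\mathrm{f}_i(x_i) + b$ holding (almost surely) over the training distribution. The whole argument then reduces to taking a conditional expectation of this identity and peeling off one coordinate at a time.

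First I would fix an index $j$ and condition both sides on $x_j$, using linearity of expectation:
\[
\mathrm{E}[t\mid x_j] = \mathrm{E}[\mathrm{f}_j(x_j)\mid x_j] + \sum_{i\neq j}\mathrm{E}[\mathrm{f}_i(x_i)\mid x_j] + b \text{.}
\]
The first term on the right is simply $\mathrm{f}_j(x_j)$, since $\mathrm{f}_j(x_j)$ is $x_j$-measurable. The key structural step is to handle the cross terms: because $x_{1,2,\dots,m}$ are assumed mutually independent on the training set, $x_i$ is independent of $x_j$ for each $i\neq j$, and hence $\mathrm{E}[\mathrm{f}_i(x_i)\mid x_j] = \mathrm{E}[\mathrm{f}_i(x_i)]$, a constant that does not depend on $x_j$.

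Collecting these constants, I would set $B_j := -b - \sum_{i\neq j}\mathrm{E}[\mathrm{f}_i(x_i)]$ and rearrange the displayed equation to obtain $\mathrm{f}_j(x_j) = \mathrm{E}[t\mid x_j] + B_j$, which is exactly the claimed form; repeating the argument for each $j\in\{1,\dots,m\}$ yields the full statement.

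I expect the only delicate point to be the rigor of the probabilistic manipulation rather than any hard computation. One must read ``$y=t$ on the training set'' as an almost-sure equality so that the conditional expectations are well-defined and the substitution is legitimate, and one needs each $\mathrm{f}_i(x_i)$ to be integrable so that the constants $\mathrm{E}[\mathrm{f}_i(x_i)]$ exist. It is also worth noting that only pairwise independence of $x_i$ and $x_j$ is actually invoked when collapsing each conditional expectation to its mean, so the full mutual-independence hypothesis is slightly stronger than strictly needed here. If one instead prefers not to assume an exact fit, the same identity $\mathrm{E}[\,t - \sum_i \mathrm{f}_i(x_i) - b \mid x_j\,] = 0$ arises as the first-order stationarity condition of the squared-loss functional under perturbations $\mathrm{f}_j \mapsto \mathrm{f}_j + \epsilon\,\phi$ with $\phi$ an arbitrary function of $x_j$, after which the remainder of the argument is unchanged.
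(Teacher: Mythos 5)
Your proof is correct and follows essentially the same route as the paper's: condition the exact-fit identity $t=\sum_i \mathrm{f}_i(x_i)+b$ on a single coordinate, use measurability to keep $\mathrm{f}_j(x_j)$ and independence to collapse the cross terms $\mathrm{E}[\mathrm{f}_i(x_i)\mid x_j]$ into constants, then absorb them into $B_j$. Your version is slightly tidier (generic index $j$ rather than the paper's $i=1$, plus the accurate remarks that only pairwise independence and integrability are actually needed), but it is the same argument.
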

\begin{proof}
From the assumption of $y=t$, we have $\sum_i \mathrm{f}_i(x_i) + b=t$ on all training data. Taking the conditional expectation of $x_1$ on both sides, we have $\sum_i \mathrm{E}[\mathrm{f}_i(x_i)|x_1] + b = \mathrm{E}[t|x_1]$. Based on the assumption that $x_{1,2,\dots,m}$ are mutually independent, $\mathrm{E}[\mathrm{f}_1(x_1)|x_1]=\mathrm{f}_1(x_1)$ and $\mathrm{E}[\mathrm{f}_i(x_i)|x_1]=\mathrm{E}[\mathrm{f}_i(x_i)]$ for every $i \geq 2$. Thus $\mathrm{f}_1(x_1)=\mathrm{E}[t|x_1] + B_i$ where $B_i=-(b+\mathrm{E}[\mathrm{f}_2(x_2)]+\mathrm{E}[\mathrm{f}_3(x_3)]+\dots+\mathrm{E}[\mathrm{f}_m(x_m)])$.
\end{proof}

Theorem~\ref{the:0} shows that $\mathrm{f}_i(x_i)$ is a one-dimensional function that approximates $t$ given feature $x_i$, ignoring constant $B$. Therefore, it is very important to be able to learn this one-dimensional function that can achieve small model bias. Although the assumption of mutually independent $x_{1,2,\dots,m}$ in theorem~\ref{the:0} might not always hold, we found in practice that usually $\mathrm{f}_i(x_i)$ does approximates $t$ in real world datasets, as shown in Fig.~\ref{fig:smthact_pose}. If all of the activation functions $\mathrm{f}_{1,2,\dots,m}$ have the same simple form, \eg, ReLU, then all of the features $x_i$ must be correlated with $t$ linearly. To generate those features $x_i$, more neurons and layers are needed. We propose to model $\mathrm{f}_{1,2,\dots,m}$ as AAFs. In this way, we can add a small number of parameters to achieve small model bias efficiently. In our experiments, applying AAF on the regression layer adds less than 1\% to the total number of NN parameters and less than 10\% to the training time.

\section{Smooth adaptive activation function}
\label{sec:smooth}
We introduce the Smooth Adaptive Activation Function (SAAF) formally in this section and show its advantages in Sec.~\ref{sec:properties}. Given $n+1$ real numbers $a_{1,2,\dots,n+1}$ in ascending order and a non-negative integer $c$, using $\mathbbm{1}(\cdot)$ as the indicator function, we define the SAAF as:
\begin{equation}
\begin{array}{l}
\text{\hspace{3.7cm}} \mathrm{f}(x)=\sum\limits_{j=0}^{c-1} v_j \mathrm{p}^{j} (x) + \sum\limits_{k=1}^{n} w_k \mathrm{b}_k^c(x) \text{,} \\

\text{where \quad}\mathrm{p}^j (x)=\cfrac{x^j}{j\,!} \text{\,,\quad} \mathrm{b}_k^c(x)=\underbrace{\iint\ldots\int_0^x}_{c \text{ times}} \mathrm{b}_k^0(\alpha) \> \mathrm{d}^c \alpha \text{\,,\quad} \mathrm{b}_k^0(x)=\mathbbm{1}(a_k \le x < a_{k+1}) \text{.}
\end{array}
\label{equ:activation}
\end{equation}
The SAAF $\mathrm{f}(x)$ is piecewise polynomial. Predefined parameters $c$ and $a_k$ are the degree of polynomial segments and break points respectively. The parameters $w_k$ and $v_j$ are learned during the training stage. $\mathrm{b}_k^c$ and $\mathrm{p}^j$ are basis functions and $\mathrm{f}(x)$ is a linear combination of these basis functions. $\mathrm{b}_k^0$ is the boxcar function. $\mathrm{b}_k^1$ is the integral of $\mathrm{b}_k^0$, which looks like the step function. $\mathrm{b}_k^2$ is the integral of $\mathrm{b}_k^1$, which looks like the ramp function or ReLU. The degree of the polynomial segments in $\mathrm{f}(x)$ is determined by the degree of the basis functions $\mathrm{b}_k^c$ and $\mathrm{p}^j$. Fig.~\ref{fig:firstfig} visualizes the construction of $\mathrm{f}(x)$.

Based on this parameterization, we can see that the order of polynomial segments can be defined to an arbitrary number. This allows the SAAF to have a larger variety of forms compared to existing AAFs. Additionally, for each polynomial segment, there is only one parameter controlling the $c$-th order derivative within the segment. Derivatives of lower order are guaranteed to be continuous across the entire SAAF, including the locations of break points. By regularizing the parameters $w_{1,2,\dots,n}$, the magnitude of the $c$-th order derivative is regularized. In other words, the resulting activation function is smooth. As a result, NNs with SAAFs are smooth functions. Fig.~\ref{fig:notoverfitting} gives examples of one-dimensional function learning.

\section{SAAF properties}
\label{sec:properties}
We discuss properties and advantages of SAAFs in detail. In summary, an SAAF can approximate any one-dimensional function to any desired degree of accuracy given a sufficient number of segments. NNs with SAAFs have bounded fat-shattering dimension when the NN parameters are regularized.

\subsection{SAAFs as universal approximators}
\label{sec:universal}
Piecewise polynomials can approximate any one-dimensional continuous function to any degree of error given sufficiently small polynomial segments~\cite{larson2013finite}. Because the range of a neuron's input is bounded in practice, an SAAF with a sufficient number of polynomial segments can approximate any function to any degree of error. Moreover, an SAAF with a finite Lipschitz constant can approximate any function that has a smaller Lipschitz constant.

\subsection{An NN with SAAFs is Lipschitz continuous}
We show that because the smoothness of an SAAF can be regularized, a feedforward NN with SAAFs is Lipschitz continuous, given a bounded magnitude of the parameters in the NN. A function $\mathrm{f}$ is Lipschitz continuous if there exists a real constant $L$ such that for any $\alpha_1$ and $\alpha_2$ in the domain of $\mathrm{f}$, $\lvert \mathrm{f}(\alpha_1) - \mathrm{f}(\alpha_2) \rvert \le L \lvert\lvert \alpha_1 - \alpha_2\rvert\rvert$. We use the Euclidean norm in this paper. The constant $L$ is the Lipschitz constant of $\mathrm{f}$.

Assuming a bounded range of input $x$, obviously the maximum derivative magnitude of $\mathrm{f}(x)$ is its Lipschitz constant. Therefore, $L$ can be derived by integrating the parameters $w_{1,2,\dots,n}$ and $v_{1,2,\dots,c-1}$.
\begin{equation}
\begin{split}
L & = \max\limits_{x} \,\biggl\lvert\,\underbrace{{\iint\ldots\int_0^x}}_{c - 1 \text{ times}} \mathrm{w}(\alpha) \,\mathrm{d}^{c-1}\alpha + \sum\limits_{j=1}^{c-1}\underbrace{{\iint\ldots\int_0^x}}_{j - 1 \text{ times}} v_j \,\mathrm{d}^{j-1}\alpha\,\biggr\lvert \text{,}
\end{split}
\label{equ:1OrderBound}
\end{equation}
where $\mathrm{w}(\alpha)=\sum_{k=1}^n w_k \mathrm{b}_k^c(\alpha)$. For example, given $c=1$, $L = \max_k \,\lvert w_k \rvert$. Given $c=2$, $L = \max_x\lvert v_1+\int_0^x \mathrm{w}(\alpha)\rvert$. It has been shown~\cite{anthony2009neural} that if the activation functions in an NN are Lipschitz continuous, then the NN is Lipschitz continuous.

Note that NNs with other activation functions such as the Sigmoid, ReLU and PReLU are also Lipschitz continuous given a bounded magnitude of their parameters. Therefore, NNs with combinations of such activation functions and SAAFs are also Lipschitz continuous. However, NNs with these activation functions tend to have large model bias, as argued in Sec~\ref{sec:desired}.

\subsection{Bounded model complexity}
\label{sec:fat-shattering}
In this section, we prove an upper bound of the model complexity in terms of fat-shattering dimension~\cite{bartlett1994fat} for any Lipschitz continous model (function) such as NNs with SAAFs. Given two models with the same training error, the model with a lower fat-shattering dimension has a better expected generalization error~\cite{anthony2009neural}. Upper bounds of the fat-shattering dimension have been proven~\cite{journals/tit/Bartlett98,anthony2009neural} for Lipschitz continuous NNs under assumptions such as the magnitude of NN parameters. We prove an exact upper bound for any Lipschitz continuous regression model with no other model assumptions.

The fat-shattering dimension is a scalar-related dimension defined as follows: Suppose that $F$ is a set of functions mapping from a domain $X$ to $\mathbb{R}$, $D = \left\{{\mathbf{x}_1, \mathbf{x}_2, \dots, \mathbf{x}_z}\right\}$ is a subset of the domain $X$, and $\gamma$ is a positive real number. Then $D$ is $\gamma$-shattered by $F$ if there exist real numbers $t_1, t_2, \dots, t_z$, such that for all $b\in \left\{{0, 1}\right\}^z$, there exists a function $\mathrm{f}_b$ in $F$, such that for all $1\le i \le m$,
\begin{equation}
\begin{array}{l}
\mathrm{f}_b(\mathbf{x}_i) \ge t_i + \gamma \qquad \text{if} \quad b_i = 1,\\
\mathrm{f}_b(\mathbf{x}_i) \le t_i - \gamma \qquad \text{if} \quad b_i = 0\text{.}
\end{array}
\label{equ:fat}
\end{equation}
The fat-shattering dimension of $F$ at scale $\gamma$, denoted as $\operatorname{fat}_F(\gamma)$, is the size of the largest subset $D$ that is $\gamma$-shattered by $F$.

\begin{theorem}
Suppose all data points lay in a d-dimensional cube of unit volume. All functions in function set $F$ have a Lipschitz constant $L$. Then, the set $F$ has bounded fat-shattering dimension: 
\begin{equation}
\operatorname{fat}_F(\gamma) \le d + \frac{L^d d\,!}{\gamma^d \sqrt{2^d(d+1)}}\text{.}
\label{equ:fatbyme}
\end{equation}
\label{the:1}
\end{theorem}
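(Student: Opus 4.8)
The plan is to reduce the claim to a geometric packing bound in two stages: convert $\gamma$-shattering together with the Lipschitz condition into a \emph{separation} guarantee on the shattered points, then bound how many separated points fit in the unit cube. For the separation, suppose $D=\{\mathbf{x}_1,\dots,\mathbf{x}_z\}$ is $\gamma$-shattered with witnesses $t_1,\dots,t_z$ (Eq.~\ref{equ:fat}). Fix $i\ne j$. The label vector with $b_i=1,\,b_j=0$ supplies $\mathrm{f}\in F$ with $\mathrm{f}(\mathbf{x}_i)\ge t_i+\gamma$ and $\mathrm{f}(\mathbf{x}_j)\le t_j-\gamma$, while the opposite labels supply $\mathrm{g}\in F$ with $\mathrm{g}(\mathbf{x}_i)\le t_i-\gamma$ and $\mathrm{g}(\mathbf{x}_j)\ge t_j+\gamma$. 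Differencing each function and invoking $L$-Lipschitz continuity gives
\[
(t_i-t_j)+2\gamma \le L\,\lVert\mathbf{x}_i-\mathbf{x}_j\rVert,\qquad (t_j-t_i)+2\gamma \le L\,\lVert\mathbf{x}_i-\mathbf{x}_j\rVert;
\]
adding these cancels the unknown witnesses and yields $\lVert\mathbf{x}_i-\mathbf{x}_j\rVert \ge 2\gamma/L =: \delta$, so the shattered set is $\delta$-separated.

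Next I would bound the cardinality of a $\delta$-separated set in the unit-volume cube. The one-dimensional case is the template: sorting the points, the $z-1$ consecutive gaps are disjoint and each at least $\delta$, so $z\le 1+1/\delta$, which is exactly the asserted bound at $d=1$. In general, the target Eq.~\ref{equ:fatbyme} is equivalent to $(z-d)\,V_\triangle(\delta)\le 1$, where $V_\triangle(\delta)=\frac{\delta^{d}}{d!}\sqrt{\frac{d+1}{2^{d}}}$ is the volume of a regular $d$-simplex of edge $\delta$; substituting $\delta=2\gamma/L$ recovers the stated constant. I would therefore try to charge to all but $d$ of the points a pairwise-disjoint region of volume at least $V_\triangle(\delta)$ and bound the total volume of these regions by that of the cube, the extremal local configuration of $\delta$-separated points being a regular simplex of edge $\delta$.

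The main obstacle is precisely this geometric step, and in particular producing the \emph{exact} simplex constant. The naive realization---triangulating the convex hull of the points and asserting that each full-dimensional cell has volume $\ge V_\triangle(\delta)$---does not work, since a $\delta$-separated set can be triangulated with arbitrarily thin sliver simplices and its hull may even be affinely degenerate (the collinear configuration in the square already has zero hull volume). A correct argument must control \emph{total} occupied volume rather than individual cells: I would attempt either a dimension-reduction induction anchored at the $d=1$ bound, or a direct packing estimate that explicitly accounts for points near the cube's boundary, which is what produces the additive $d$ (the higher-dimensional analogue of ``$z-1$ gaps for $z$ points''). Checking that the bound survives affinely degenerate configurations, where the effective dimension and hence $V_\triangle$ change, is the point I would treat most carefully.
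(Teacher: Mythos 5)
Your separation step is correct and is essentially the paper's: the paper fixes two points, assumes without loss of generality $t_1\ge t_2$, and uses the single labeling $b_1=1,\,b_2=0$ to get $2\gamma\le L\lVert\mathbf{x}_1-\mathbf{x}_2\rVert$; your symmetric two-labeling version reaches the same conclusion, that any $\gamma$-shattered set is $\delta$-separated with $\delta=2\gamma/L$. Your reformulation of what remains---that $z$ pairwise $\delta$-separated points in a unit-volume cube must satisfy $(z-d)\,V_\triangle(\delta)\le 1$, where $V_\triangle(\delta)$ is the volume of a regular $d$-simplex of edge $\delta$---is also exactly right, and matches the paper's constant $V_d=\sqrt{d+1}/(d!\sqrt{2^d})$. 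But that geometric claim is where your proposal stops: you offer two strategies you \emph{would attempt} (a dimension-reduction induction, or a boundary-aware packing estimate) without carrying either out. As submitted, this is a proof outline with a declared hole, not a proof, and the packing claim carries all of the quantitative content of the theorem.

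The twist is that the step you decline to assert is precisely the step the paper does assert. The paper's proof forms a simplex mesh on the shattered points, counts at least $\operatorname{fat}_F(\gamma)-d$ cells, observes that every edge has length at least $2\gamma/L$, and concludes that each cell has volume at least $V_d(2\gamma/L)^d$. That conclusion presupposes that, among simplices with all edge lengths at least $\delta$, the regular simplex of edge $\delta$ has minimal volume---which is false: the triangle with vertices $(0,0)$, $(M,\epsilon)$, $(2M,0)$ has all edges of length at least $M\ge\delta$ yet area $M\epsilon$, which can be made arbitrarily small. Moreover, as you note, for collinear (affinely degenerate) $\delta$-separated configurations no full-dimensional mesh exists at all, so the paper's volume inequality is not even well-defined there. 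In other words, your critique identifies a genuine, unacknowledged gap in the paper's own proof rather than a defect peculiar to your plan---but your plan does not repair it. A standard repair (disjoint balls of radius $\delta/2$ contained in the cube inflated by $\delta/2$) yields a bound of the same order $O\bigl((L/\gamma)^d\bigr)$, but with the ball-volume constant and a $(1+\delta)^d$ correction rather than the paper's exact simplex constant and additive $d$; establishing Eq.~\ref{equ:fatbyme} exactly as stated would require an argument that neither you nor the paper supplies.
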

\begin{proof}
For simplicity, we consider shattering two points $\mathbf{x}_1$ and $\mathbf{x}_2$ with $t_1$ and $t_2$ only. Without loss of generality, assume $t_1 \ge t_2$. If $\mathrm{f}_b\in F$ satisfies Eq.~\ref{equ:fat} when $b_1 = 1$, $b_2=0$, then we have $\lvert \mathrm{f}_b(\mathbf{x}_1)-\mathrm{f}_b(\mathbf{x}_2)\rvert \ge \lvert t_1 - t_2 + 2\gamma\rvert \ge 2\gamma$. Denote $s=\lvert\lvert\mathbf{x}_1 - \mathbf{x}_2\rvert\rvert$. According to the definition of Lipschitz continuity, we have $2 \gamma \le sL$. In other words the distance $s$ between two points must be no smaller than $2\gamma/L$ in order for $F$ to possibly $\gamma$-shatter them. Next we derive $\operatorname{fat}_F(\gamma)$ which is the maximum number of points $F$ can $\gamma$-shatter in a cube of unit volume. Those points form a simplex mesh of at least $\operatorname{fat}_F(\gamma)-d$ number of simplexes. The nodes of the simplex mesh are the data points $\mathbf{x}_{1,2,\dots}$ that we expect $F$ to possibly $\gamma$-shatter. The side length of each simplex should be at least $2\gamma/L$. Therefore the total volume of the mesh is no smaller than $V_d (2 \gamma / L)^d (\operatorname{fat}_F(\gamma)-d)$, where $V_d = \sqrt{d+1}\,/(d\,!\sqrt{2^d})$ is the volume of a d-dimensional regular simplex of unit side length. Because we assume the volume of the simplex mesh (where all data points lay) is no greater than $1$, we have $\operatorname{fat}_F(\gamma) \le d + (2 \gamma / L)^{-d}/V_d$. If we expand $V_d$, we derive Eq.~\ref{equ:fatbyme}.
\end{proof}

From theorem~\ref{the:1}, when $L$ decreases, the fat-shattering dimension decreases polynomially. For an NN with SAAFs, since $L$ is bounded by the magnitude of the NN parameters, regularizing the parameters will reduce model complexity polynomially.

\section{Experiments}
\label{sec:experiments}
We tested NNs with piecewise linear (c=1) and quadratic (c=2) SAAFs on six real world datasets. We implemented NNs using Theano~\cite{Bastien-Theano-2012}. In all experiments, we added $10^{-5}$ times the L2 norm of all NN parameters as a regularization term in the loss function. We also use batch normalization~\cite{ioffe2015batch} to speed up the training process. The activation functions we tested are:
\begin{enumerate}[itemsep=0.0ex,leftmargin=0.5cm]
\item Rectified Linear Units (ReLU)~\cite{maas2013rectifier}. Rectifier networks have been successful in many applications.
\item Leaky Rectified Linear Units (LReLU)~\cite{maas2013rectifier}. LReLU has a fixed negative slope $\alpha$ when the input is negative. We set $\alpha=-1/3$ in all experiments.
\item Parametric Rectified Linear Units (PReLU)~\cite{he2015delving}. Compared to LReLU, PReLU is an AAF with a learnable slope. In CNNs, neurons that share the same filter weights also share the same slope.
\item Adaptive Piecewise Linear Units (APLU)~\cite{agostinelli2014learning}. APLU is a piecewise linear parameterization that is different from SAAF. In all experiments, we use 5 linear segments as suggested by~\cite{agostinelli2014learning}.
\item Piecewise linear and quadratic SAAF (SAAFc1 and SAAFc2), our proposed method. In CNNs, neurons that share the same filter weights also share the same SAAF parameters. We randomly chose 22 as the number of segments, based on our proof that the model complexity can be bounded regardless of the number of polynomial segments. Break points are distributed from $-1.1$ to $1.1$.
\end{enumerate}

According to Sec~\ref{sec:desired}, AAFs on the regression neurons are especially important. Therefore we also tested the variant of applying AAFs only to regression neurons, instead of all neurons. In this case, neurons other than the regression neurons used ReLU. We distinguish AAFs only on regression neurons with prefix \textbf{R-} such as R-APLU, R-SAAFc1 \etc. Note that R-ReLU is equivalent to ReLU.

On six datasets, NNs with proposed SAAFs reduced the error of NNs with ReLU, LReLU, PReLU, APLU by \textbf{4.2-22.6\%, 6.6-20.8\%, 4.7-21.1\%, 7.4-25.0\%} respectively. Notably, on several pose estimation and age estimation datasets, we followed the training and testing set split scheme in the literature and achieved state-of-the-art results.

\subsection{Pose estimation}
\label{sec:pose}
Pose estimation is a fundamental problem in computer vision~\cite{chen2014articulated}. We focus on estimating human pose in single frame human images. Following a recent approach~\cite{belagiannis2015robust}, we address this problem by regressing a set of joint positions. There are 14 joint locations: bottom/top of head, left/right ankle, knee, hip, wrist, elbow, and shoulder. Each joint position is expressed by the $x$ and $y$ coordinates. Thus, in total there are 28 real numbers associated with each human image. Also following~\cite{belagiannis2015robust}, we used the widely used observer-centric ground truth~\cite{chen2014articulated}. We tested our method on the LSP~\cite{johnson2010clustered} and volleyball~\cite{belagiannis2014holistic} datasets containing 2000 and 1107 cropped human images respectively. We did not use the other two datasets in~\cite{chen2014articulated} because they contain only 100 to 200 training images. We used the same evaluation metric: Percentage of Correctly estimated Parts (PCP) as in~\cite{belagiannis2015robust}.

We implemented the current state-of-the-art regression NN~\cite{belagiannis2015robust} as the baseline. We used the same network architecture, Tukey's biweight loss function, and cascade of four CNNs, and just changed the CNN's activation function. Additionally, we doubled the amount of NN filters and applied extensive data augmentation on the training set. In all datasets, NNs with proposed SAAFs achieved better results than NNs with other activation functions, shown in Tab.~\ref{tab:poseoverall}.  In order to examine the effect of adding additional layers with non-adaptive activation functions compared to using SAAFs, we experimented with a ReLU NN with one additional 1K node layer. Performance increased from 62.6\% to 63.7\% whereas using R-SAAFc2 on the original smaller NN achieved 68.6\%. Using the cascade of four CNNs~\cite{belagiannis2015robust}, our proposed method (R-SAAFc2 cascade) outperformed the current state-of-the-art regression NN~\cite{belagiannis2015robust}, as shown in Tab.~\ref{tab:poseresults}. Examples of pose estimation results with our method are given in Fig.~\ref{fig:pose}.

\begin{figure}[h!]
\begin{center}
   \includegraphics[height=0.175\linewidth,clip]{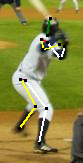}
   \includegraphics[height=0.175\linewidth,clip]{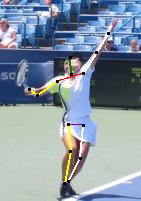}
   \includegraphics[height=0.175\linewidth,clip]{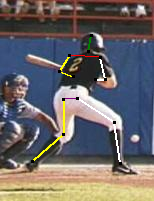}
   \includegraphics[height=0.175\linewidth,clip]{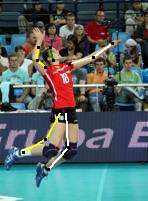}
   \includegraphics[height=0.175\linewidth,clip]{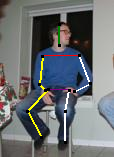}
   \includegraphics[height=0.175\linewidth,clip]{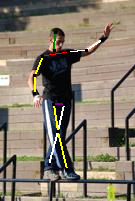}
   \includegraphics[height=0.175\linewidth,clip]{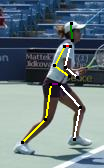}
   \includegraphics[height=0.175\linewidth,clip]{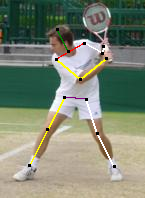}
\end{center}
   \caption{Examples of estimated poses on the LSP testing set, using our proposed method.}
\label{fig:pose}
\end{figure}

\begin{table}[h!]
\caption{Averaged PCP results of NNs with different activation functions on LSP and Volleyball pose estimation datasets. NNs with R-SAAFc2 achieved the best results.}
\centering
\begin{center}
\begin{tabular}{lrr|lrr|lrr}
\toprule
 & \multicolumn{2}{c|}{Avg. PCP} & & \multicolumn{2}{c|}{Avg. PCP} & & \multicolumn{2}{c}{Avg. PCP} \\
Methods & LSP & Volly. & Methods & LSP & Volly. & Methods & LSP & Volly. \\
\midrule
R-ReLU & 62.5 & 80.7 &  ReLU & 62.6 & 80.7 & R-SAAFc1 & 68.3 & \textbf{84.6} \\
R-LReLU & 63.1 & 81.4 &  LReLU & 63.1 & 81.4 & R-SAAFc2 & \textbf{68.6} & 84.5 \\
R-PReLU & 62.1 & 81.8 &  PReLU & 63.2 & 81.9 & SAAFc1 & 67.2 & 83.9 \\
R-APLU & 63.5 & 80.3 &  APLU & 61.9 & 80.0 & SAAFc2 & 66.6 & 82.1 \\
\bottomrule
\end{tabular}
\end{center}
\label{tab:poseoverall}
\end{table}

\begin{table}[h!]
\caption{PCP results of our proposed method vs. the current state-of-the-art regression NN on two pose estimation datasets. The cascade of 4 CNNs with R-SAAFc2 outperformed the state-of-the-art regression NN in terms of the average PCP.}
\centering
\begin{center}
\begin{tabular}{lllllllll}
\toprule
 &  &  &  & Uppr & Lowr & Uppr & Lowr & \\
Dataset & Method & Head & Torso & Legs  & Legs  & Arms & Arms & Avg. \\
\midrule
\multirow{4}{*}{LSP} & Belagiannis~\cite{belagiannis2015robust} & 72.0 & 91.5 & 78.0 & 71.2 & 56.8 & 31.9 & 63.9 \\
& Belagiannis cascade~\cite{belagiannis2015robust} &  83.2 & 92.0 & 79.9 & 74.3 & 61.3 & 40.3 & 68.8 \\
& R-SAAFc2 & 76.6 & 90.9 & 81.3 & 74.6 & 64.5 & 38.8 & 68.6 \\
& R-SAAFc2 cascade & \textbf{83.9} & \textbf{92.8} & \textbf{82.6} & \textbf{77.8} & \textbf{65.6} & \textbf{45.8} & \textbf{72.0} \\
\midrule
& Belagiannis~\cite{belagiannis2015robust} &  90.4 & 97.1 & 86.4 & \textbf{95.8} & 74.0 & 58.3 & 81.7 \\
Volley- & Belagiannis cascade~\cite{belagiannis2015robust} &  89.0 & 95.8 & 84.2 & 94.0 & 74.2 & 58.9 & 81.0 \\
ball & R-SAAFc2 &  88.6 & \textbf{99.3} & 94.7 & 94.7 & \textbf{82.5} & \textbf{60.7} & \textbf{85.3}  \\
& R-SAAFc2 cascade &  \textbf{91.5} & \textbf{99.3} & \textbf{95.2} & 94.8 & 81.0 & 54.1 & 84.1 \\
\bottomrule
\end{tabular}
\end{center}
\label{tab:poseresults}
\end{table}

\subsection{Age estimation}
\label{sec:age}
We applied our method on two age estimation datasets: the Adience~\cite{eidinger2014age} and the ICCV 2015 ChaLearn-AgeGuess challenge dataset~\cite{escalera2015chalearn}. The goal is to predict the age of people from facial images. The Adience dataset contains 26K facial images in 8 age groups, along with a cross-validation data separation scheme. We predicted the age groups as a regression problem. The Chalearn-AgeGuess (AgeGuess) dataset contains 2.4K training and 1K validation facial images. The challenge has not released the testing set, Thus we compared the validation error with other methods.

The architecture of the network was similar to the VGG 16-layer net~\cite{Simonyan14c}. We used less filters and layers. We used $1\times 1$ convolutional filters as a Network In Network (NIN) model~\cite{lin2013network}. We cropped the faces out from images using a face detection method~\cite{mathias2014face}. Then facial images were resized to 120 $\times$ 90 followed by the extensive data augmentation method useded in Sec.~\ref{sec:pose}. For experiments on the AgeGuess dataset, we first pretrained our CNNs on the Adience dataset. The results are shown in Tab.~\ref{tab:ageestimation}. We achieved state-of-the-art result on the Adience dataset. We achieved the best single CNN results on the AgeGuess dataset. Better results in the challenge~\cite{escalera2015chalearn} used extensive prediction fusions (at least 4 CNNs) and very large external datasets (at least 100X larger than the AgeGuess dataset).

\begin{table}[!htb]
    \caption{Results on age estimation datasets. For the Adience dataset, the evaluation metrics are: exact age-group match accuracy (Exact) and within 1 age-group off accuracy (1-off). We achieved state-of-the-art result on this dataset. For the Chalearn-AgeGuess (AgeGuess) dataset, we used the standard error metric used in the challenge~\cite{escalera2015chalearn}. We achieved the best single-CNN result. Better results in the challenge~\cite{escalera2015chalearn} used at least 4 CNNs and external datasets that are at least 100X larger than the AgeGuess dataset.}
\centering
\begin{tabular}{lccc|lccc}
\toprule
& \multicolumn{2}{c}{Adience} & AgeGuess & \qquad & \multicolumn{2}{c}{Adience} & AgeGuess \\
Method & Exact & 1-off & Error & Method & Exact & 1-off & Error \\
\midrule
Levi \etal~\cite{levi2015age} & 50.7 & 84.7 & - & Lab219 ~\cite{escalera2015chalearn} & - & - & 0.477 \\
R-ReLU & 49.8 & 85.1 & 0.417 & ReLU & 49.5 & 84.8 & 0.404 \\
R-LReLU & 50.2 & 84.9 & 0.423 & LReLU & 50.2 & 84.9 & 0.423 \\
R-PReLU & 51.2 & 84.7 & 0.419 & PReLU & 51.0 & 84.8 & 0.415 \\
R-APLU & 49.8 & 84.0 & 0.425 & APLU & 49.6 & 83.5 & 0.432 \\
\midrule
R-SAAFc1 & 52.9 & \textbf{88.0} & 0.404 & SAAFc1 & 53.3 & 87.7 & 0.400 \\
R-SAAFc2 & \textbf{53.5} & 87.9 & \textbf{0.387} & SAAFc2 & 52.6 & 87.7 & \textbf{0.387} \\
\bottomrule
\end{tabular}
\label{tab:ageestimation}
\end{table}

\begin{table}[h!]
\caption{RMSE and Pearson correlation results of facial attractiveness prediction using CNNs. The inter-observer agreement between three individuals is: 4.355 RMSE and 0.702 Corr. The proposed SAAFc2 achieved the best and close to human inter-observer agreement results. Matrix factorization plus visual regression achieved a correlation score of 0.478
~\cite{rothe2015some} on a similar dataset of 2000 female facial images downloaded from \texttt{hotornot.com}.}
\centering
\begin{center}
\begin{tabular}{lrr|lrr|lrr}
\toprule
Method  & RMSE & Corr. & Method & RMSE & Corr. & Method & RMSE & Corr. \\
\midrule
 R-ReLU & 4.267 & 0.656 &  ReLU & 4.193 &  0.651 & R-SAAFc1 & 3.860 & 0.739 \\
 R-LReLU & 4.132 & 0.666  & LReLU & 4.132 &  0.666 & R-SAAFc2 & 3.982 & 0.723 \\
 R-PReLU & 4.189 & 0.693  & PReLU & 4.153 & 0.706 & SAAFc1 & 3.918 & 0.734 \\
 R-APLU & 4.404 & 0.656  & APLU & 4.523 & 0.695 & SAAFc2 & \textbf{3.801} & \textbf{0.745} \\
\bottomrule
\end{tabular}
\end{center}
\label{tab:facialResults}
\end{table}

\begin{figure*}[h!]
\begin{center}
   \includegraphics[width=0.98\linewidth,trim={10 10 10 5},clip]{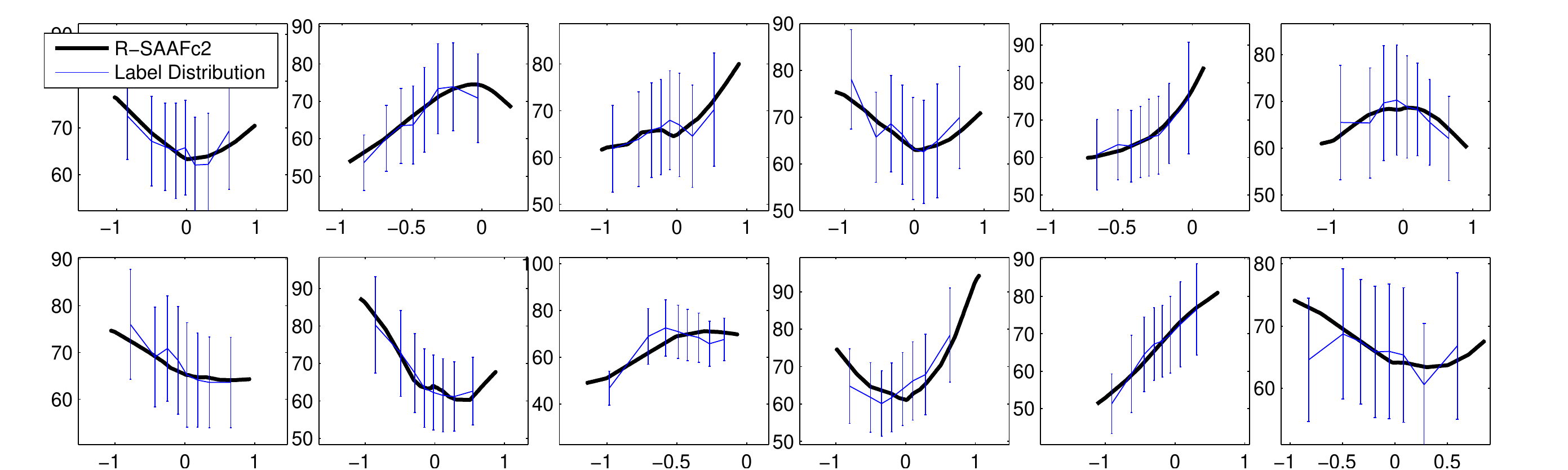}
\end{center}
   \caption{Examples of R-SAAFc2 learned on the LSP pose estimation dataset. Axes are scaled by a constant. We fed training instances into the NN to compute the inputs of regression neurons (x-axis) and the outputs of the neurons' SAAFs vs. ground truth (y-axis). We see that SAAFs of various shapes correlate with the ground truth. This graph supports our argument that the activation functions of a regression NN correlate with the ground truth, explained in Sec.~\ref{sec:desired}.}
\label{fig:smthact_pose}
\end{figure*}

\subsection{Facial attractiveness prediction}
We focus on learning personal preferences of facial attractiveness on web-quality facial images. We constructed a dataset by applying the Viola-Jones' face detection algorithm~\cite{viola_jones_2001a} on female images downloaded from \texttt{hotornot.com}. The dataset contains 2K RGB facial images of 120$\times$90 pixels. Three individuals independently rated the images with facial attractiveness scores ranging from 0 to 20. We randomly selected one of the three raters to provide the training and testing labels. Labels from the other two raters were only used to compute inter-observer agreement. We used the same CNN architecture in Sec.~\ref{sec:age}. Results are shown in Tab.~\ref{tab:facialResults}. The CNN with our SAAFc2 achieved the best three fold random-split-validation results. Our results are close to inter-observer agreement.

\begin{table}[h!]
\caption{RMSE and Pearson correlation results of learning the circularity of nuclei. Filters in all CNNs were initialized with a convolutional auto-encoder~\cite{masci2011stacked}. Directly computing circularity from automatically segmented nuclei achieved an RMSE of 0.609 and a correlation score of 0.493. It would give zero RMSE if the nuclei segmentation was perfect. Our R-SAAFc1 performed the best.}
\centering
\begin{center}
\begin{tabular}{lrr|lrr|lrr}
\toprule
Method  & RMSE & Corr. & Method & RMSE & Corr. & Method & RMSE & Corr. \\
\midrule
R-ReLU & 0.616 & 0.620 &  ReLU & 0.623 & 0.611 &  R-SAAFc1 & \textbf{0.483} & \textbf{0.649} \\
R-LReLU & 0.610 & 0.621 &  LReLU & 0.610 & 0.621  & R-SAAFc2 & 0.492 & 0.641 \\
R-PReLU & 0.508 & 0.631 &  PReLU & 0.517 & 0.622  & SAAFc1 & 0.513 & 0.634 \\
R-APLU & 0.618 & 0.626 &  APLU & 0.587 & 0.629 &  SAAFc2 & 0.498 & 0.642 \\
\bottomrule
\end{tabular}
\end{center}
\label{tab:nuclearResults}
\end{table}

\subsection{Learning the circularity of nuclei in pathology images}
Hematoxylin and eosin stained pathology images provide rich information to diagnose, classify and study cancer. One diagnostic criterion is the shape of nuclei~\cite{braak2003staging}. We use CNNs to learn the circularity of nuclei in pathology images of glioma which is the most common brain cancer. Existing methods analyze the shape of nuclei from automatically segmented nuclei~\cite{al2010improved}. We used the training set from the MICCAI 2015 nucleus segmentation challenge~\cite{miccainucleussegmentation} which contains 1K images of nuclei. We derived the ground truth circularity measurements from the ground truth nuclear segmentation masks and used CNNs to learn the circularity from nuclear images. Experiments show that our CNN-predicted circularity is more accurate than the circularity directly computed from automatically segmented nuclei. In the results, shown in Tab.~\ref{tab:nuclearResults}, the CNN with our R-SAAFc1 achieved the best three fold random-split-validation result. Note that, data-driven regression achieved better results, compared to computing circularity of nuclei directly from automatic nuclei segmentation results.

\section{Conclusions}
\label{sec:conclusion}
We have demonstrated theoretically and experimentally that using Adaptive Activation Functions (AAF) on the regression (second-to-last) layer can improve the performance of a regression NN. We proposed a novel AAF named Smooth Adaptive Activation Function (SAAF) which has multiple advantages. First, an SAAF can approximate any function to a desired degree of error. Second, using parameter regularization, an NN with SAAFs represents a Lipschitz continuous function which leads to bounded model complexity in terms of the fat-shattering dimension. Based on these two advantages, an NN with SAAFs can achieve lower model bias and complexity than NNs with other AAFs. We tested different setup of SAAFs in various NN architectures on several real world datasets. The improvements are consistent across all tested datasets compared to adaptive and non-adaptive activation functions. We achieved state-of-the-art results on multiple datasets. In the future, we will test SAAFs in classification NNs.

{\small
\bibliographystyle{ieee}
\bibliography{reference}
}

\end{document}